\definecolor{Gray}{gray}{0.85}
\definecolor{LightCyan2}{rgb}{0.94,0.81,0.81}
\definecolor{LightCyan}{rgb}{0.84,0.84,0.84}
\newtheorem{prop}{Proposition}
\def\E{{\rm E}\,}
\begin{document}

%%%%%%%%% TITLE
\title{
Compact and Optimal Deep Learning with Recurrent Parameter Generators
%Learning a Lean and 
%Neural Network Optimization with Recurrent Parameter Generators
%Recurrent Parameter Generators: Random Linearly Constrained Neural Optimization
}

\author{%
\setlength{\tabcolsep}{10pt}
\begin{tabular}{@{}ccccc@{}}
Jiayun Wang\thanks{indicates equal contribution. \newline %\indent Our code is publicly available on \href{https://github.com/samaonline/RPG-networks}{GitHub}.
}$^{\ \ 1}$ &  
Yubei Chen$^{*\ 3,4}$ &
Stella X. Yu$^{1,2}$&
Brian Cheung$^{5}$ &
Yann LeCun$^{3,4}$ \\
\end{tabular}\\[5pt]
\setlength{\tabcolsep}{3pt}
\begin{tabular}{@{}ccccc@{}}
$^1$UC Berkeley / ICSI &  
$^2$University of Michigan &
$^3$Meta AI&
$^4$New York University &  
$^5$MIT CSAIL \& BCS \\
\end{tabular}\\
\tt\small\{peterwg,stellayu\}@berkeley.edu \hfill \{yubeic,yann\}@fb.com \hfill cheungb@mit.edu\\
}
\maketitle
\thispagestyle{empty}

\begin{abstract}
Deep learning has achieved tremendous success 
by training increasingly large models, which are then compressed for practical deployment.  We propose a drastically different approach to compact and optimal deep learning:   
 We decouple the Degrees of freedom (DoF) and the actual number of parameters of a model, optimize a small DoF with predefined random linear constraints for a large model of an arbitrary architecture, in one-stage end-to-end learning.
 
Specifically, we create a recurrent parameter generator (RPG), which repeatedly fetches parameters from a ring  and unpacks them onto a large model with random permutation and sign flipping to promote parameter decorrelation.  We show that gradient descent can automatically find the best model under constraints with in fact faster convergence.

Our extensive experimentation reveals a log-linear relationship between model DoF and accuracy.  Our
 RPG demonstrates remarkable DoF reduction, 
and can be further pruned and quantized for additional  run-time performance gain.
For example, in terms of top-1 accuracy on ImageNet,
RPG achieves 96\% of ResNet18's performance with only 18\% DoF (the equivalent of one convolutional layer) and 52\% of ResNet34's performance with only 0.25\% DoF!
Our work shows significant potential of constrained neural optimization in compact and optimal deep learning.
\end{abstract}

\def\figModelRing#1{
\begin{figure}[#1]\centering
\vspace{-1.5em}
\includegraphics[width=0.92\columnwidth]{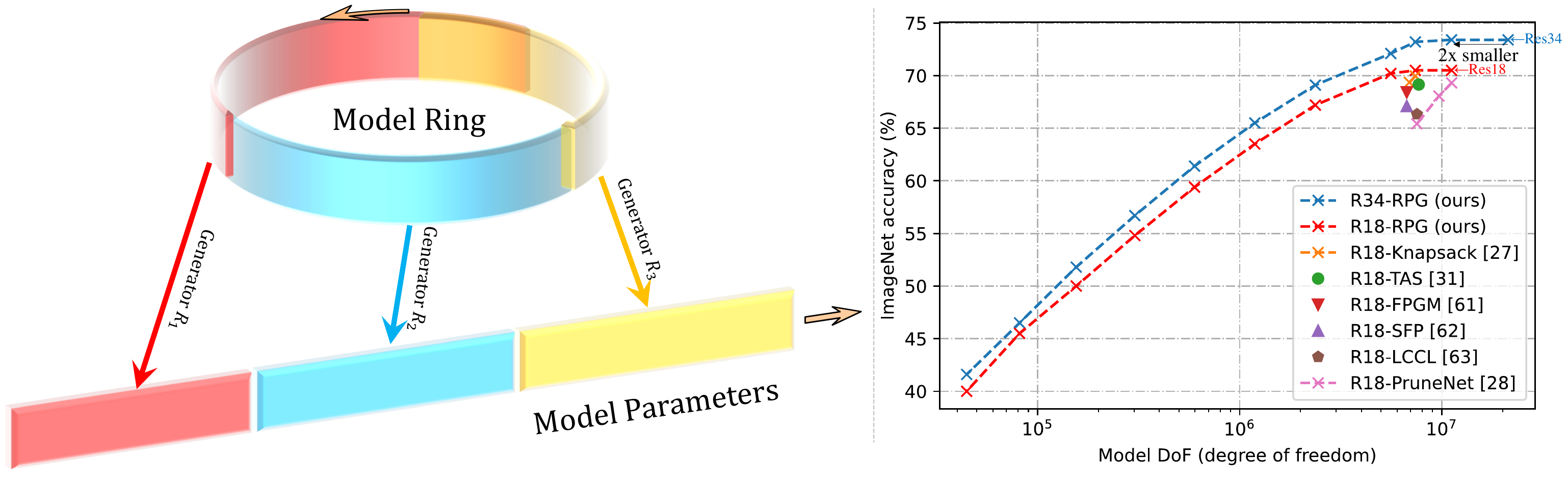}
\caption{ \small We propose a recurrent parameter generator (RPG) that shares a fixed set of parameters in a ring and use them to generate parameters of different parts of a neural network, %simply by permutations and sign flipping, 
whereas in the standard neural network, all the parameters are independent of each other, so the model gets bigger as it gets deeper. \textbf{Left}: The \textcolor{orange}{third} section of the model starts to overlap with the \textcolor{red}{first} section in the model ring, and all later layers 
 share generating parameters for possibly multiple times.
 \textbf{Right:} Employing the Recurrent Parameter Generator (RPG) for ResNet could reduce the model parameters to any size. Specifically, with only half ResNet34 backbone parameters, we achieve the same ImageNet top-1 accuracy. 
 We also outperform model compression methods such as Knapsack \cite{aflalo2020knapsack}.% and PruneNet \cite{khetan2020prunenet}.
}
\vspace{-1.5em}
\label{fig:modelRing}
\end{figure}
}

\def\figtasks#1{
\vspace{-1.5em}
\begin{figure*}[#1]\centering
\includegraphics[width=0.9\textwidth]{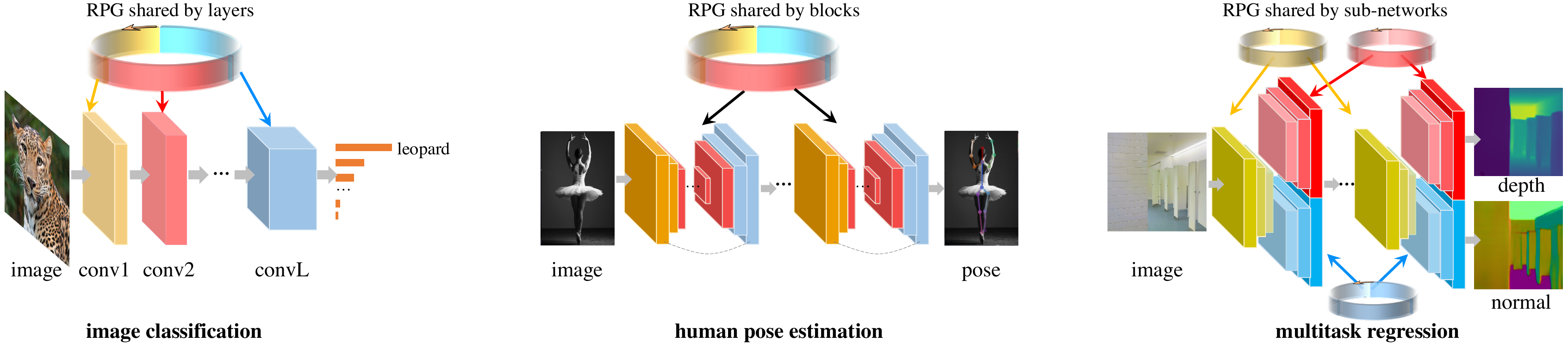}
\vspace{-1em}
\caption{ \small  We demonstrate the effectiveness of RPG on various applications including image classification (\textbf{Left}), human pose estimation (\textbf{Middle}), and multitask regression (\textbf{Right}). RPGs are shared at multiple scales: a network can either have a global RPG or multiple local RPGs that are shared within blocks or sub-networks.
}
\vspace{-1.5em}
\label{fig:tasks}
\end{figure*}
}

\section{Introduction}

\begin{figure}[t]
    \centering
 \includegraphics[width=\columnwidth]{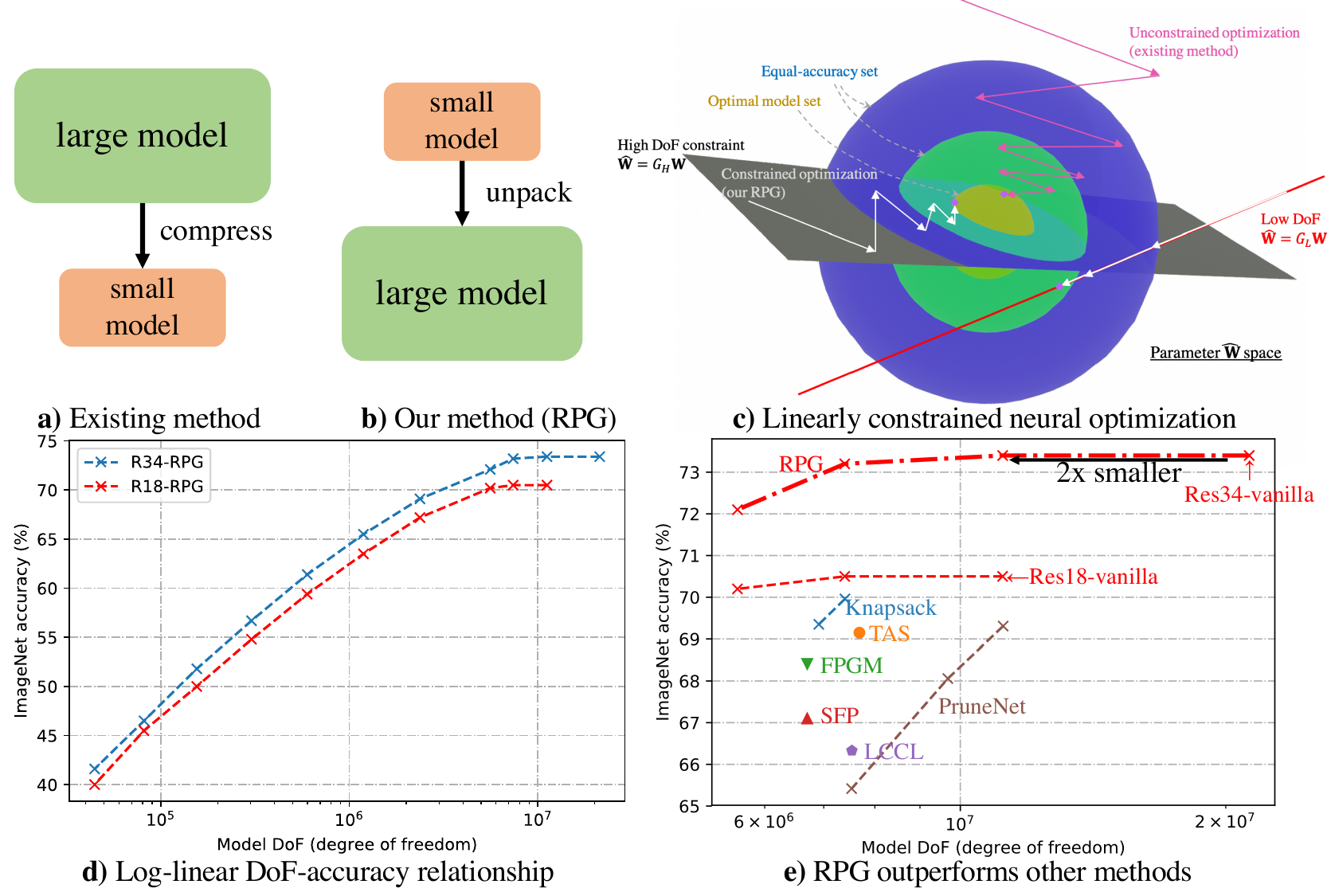}
   \caption{ \small 
   We propose a novel approach to compact and optimal deep learning by decoupling model DoF and model parameters.
   {\bf a}) Existing methods first finds the optimal in a large model space and then compress it for practical deployment.  
   {\bf b})  We propose to start with a small (DoF) model of free parameters, use 
   recurrent parameter generator (RPG) to unpack them onto a large model with predefined random linear projections. 
   {\bf c}) Gradient descent finds the optimal model of a small DoF under these linear constraints with faster converge than training the large unpacked model itself (Fig.\ref{fig:combo2}{\bf b}). 
   If the DoF is too small, the optimal large model may fall out of the constrained subpsace.  However, at a sufficiently large DoF,  RPG gets rid of redundancy and often finds a model with little loss in accuracy. 
    {\bf d}) RPG reveals a log-linear relationship between model DoF and accuracy.
   {\bf e}) RPG achieves the same ImageNet accuracy with half of the ResNet-vanilla DoF. RPG also outperforms other state-of-the-art compression approaches.
   }  
  \label{fig:teaser}
  \vspace{-1.5em}
\end{figure}

Deep neural networks as general optimization tools have achieved great success with increasingly more training data, deeper and larger neural networks:  A recently developed NLP model, GPT-3 \cite{brown2020language}, has astonishing 175 billion parameters!  
While the model performance generally scales with the number of parameters \cite{henighan2020scaling}, with parameters outnumbering training data, the model is significantly over-parameterized. 

Many approaches have been proposed to remove redundancy in trained large models: neural network pruning \cite{lecun1990optimal, han2015deep, liu2018rethinking}, efficient network design spaces \cite{howard2017mobilenets, iandola2016squeezenet, sandler2018mobilenetv2}, parameter regularization \cite{wan2013regularization, wang2020orthogonal, srivastava2014dropout, NowlanH92simplifying}, model quantization \cite{hubara2017quantized, rastegari2016xnor, louizos2018relaxed}, neural architecture search \cite{zoph2016neural, cai2018proxylessnas, wan2020fbnetv2}, recurrent models \cite{bai2019deep, bai2020multiscale, wei2016convolutional}, multi-task feature encoding \cite{ramamonjisoa2019sharpnet, hao2020multi}, etc. 
Pruning-based model compression dates back to the late 80s \cite{mozer1989using, lecun1990optimal} and has enjoyed recent resurgence \cite{han2015deep, blalock2020state}. They remove unimportant parameters from a pre-trained model and can achieve significant model compression.

 \begin{figure}[t]
    \centering
 \includegraphics[width=\columnwidth]{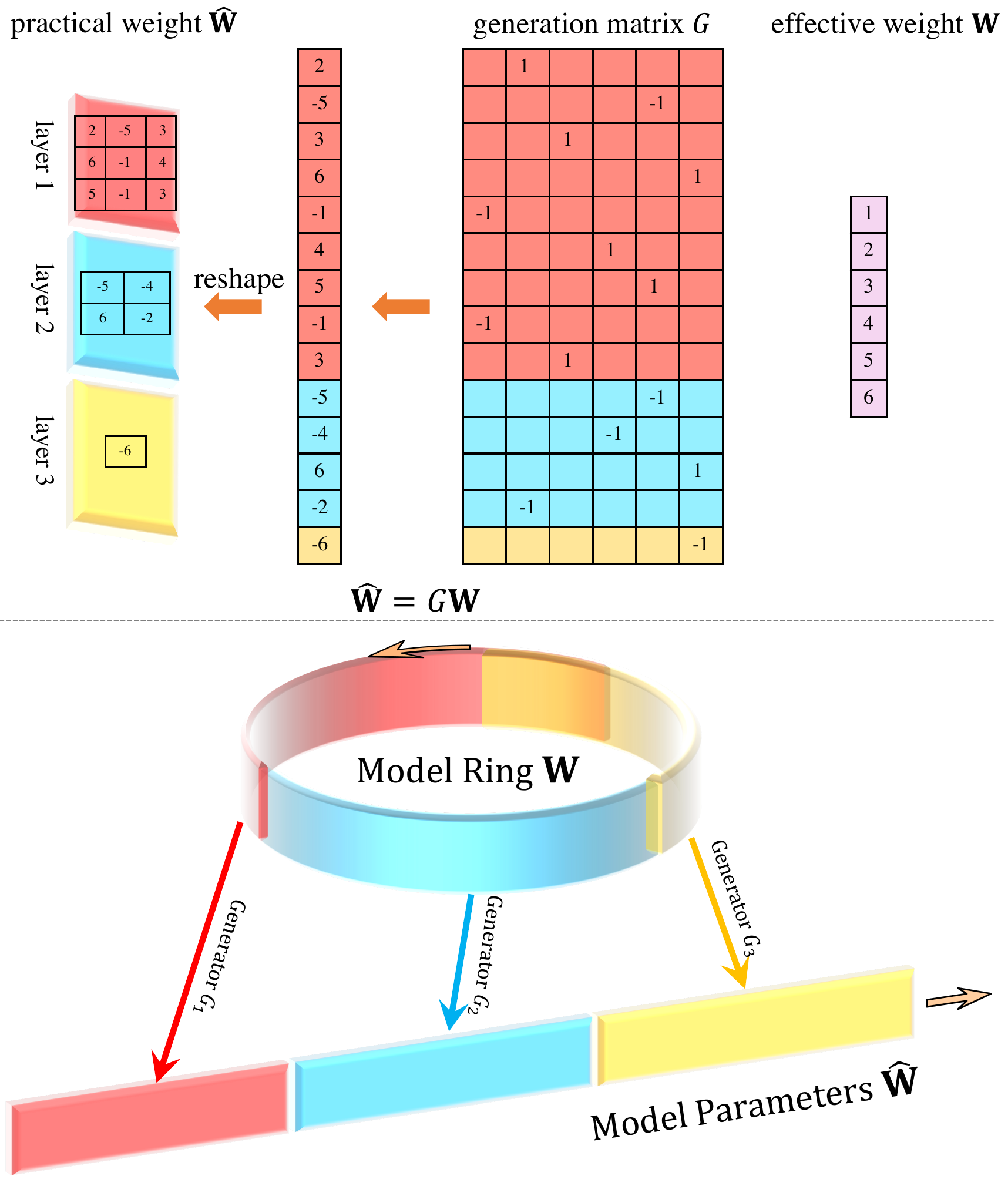}
   \vspace{-1em}
   \caption{ \small
   {\bf Upper:} Networks are optimized with a linear constraint $\hat{\mathbf{W}} = \boldsymbol{G} \mathbf{W}$, where the constrained parameter $\hat{\mathbf{W}}$ of each network layer was generated by the generating matrix $\boldsymbol{G}$ from the free parameter $\mathbf{W}$, which is directly optimized. $\hat{\mathbf{W}}$ is unpacked large model parameter while the size of $\mathbf{W}$ is the model DoF.
   {\bf Lower:} This paper discusses a specific format of parameter generation, 
  recurrent parameter generator (RPG).
   RPG shares a fixed set of parameters in a ring and uses them to generate parameters of different parts of a neural network,
whereas in the standard neural network, all the parameters are independent of each other, so the model gets bigger as it gets deeper. The \textcolor{orange}{third} section of the model starts to overlap with the \textcolor{red}{first} section in the model ring, and all later layers 
 share generating parameters for possibly multiple times.
   }  %\captionof{figure}{A figure}
  \label{fig:modelring_}
  \vspace{-1.5em}
\end{figure}

Our work is a departure from mainstream approaches towards model optimization and parameter reduction: rather than compressing a large model, we directly optimize a lean model with a small set of free parameters (number of free parameters equal to degree of freedom of the model, or DoF), which can be linearly unpacked to a large model. Training the large model can be viewed as solving a neural optimization with a set of predefined linear constraints. One benefit of constrained neural optimization we observe is that it leads to a faster convergence rate (Section \ref{sec:general}).
Specifically, we define different layers in a neural network based on a fixed amount of DoF, which we call \textit{recurrent parameter generator} (RPG).
That is, we differentiate the number of model parameters and DoF.
Traditionally, model parameters are treated independently of each other; the total number of parameters equals DoF. However, by tapping into how a core set of free parameters can be assigned to the neural network model, we can develop a large model of many parameters, which are linearly constrained by the small set of free parameters.

There is excess capacity in neural networks independent of how and where the parameters are used in the network, even at the level of individual scalar values.
Surprisingly, backpropagation training of a deep network is able to cope with that the same parameter can be assigned to multiple random locations in the network without significantly impacting model performance. Our extensive experiments show that a large neural network does not need to be over-parameterized to achieve competitive performance. Particularly, a ResNet18 can be implemented with DoF equivalent to one convolution layer in a ResNet18-vanilla ($4.72\times$ DoF reduction) and still achieves $67.2\%$ ImageNet top-1 accuracy. The proposed method is also extremely flexible in reducing model DoF. In some sense, the proposed RPG method can be viewed as an automatic model DoF reduction technique, which explores the optimal accuracy-parameter trade-off. When we reduce the model DoF, RPG demonstrates graceful performance degradation, and its compression results are frequently on par with the SOTA pruning methods besides the flexibility. Even if we reduce the Res18 backbone DoF to $36$K, which is about $300\times$ reduction, ResNet18 can still achieve $40\%$ ImageNet top-1 accuracy. Further, we show RPG can be quantized and pruned to improve FLOPs and runtime with relatively
mild accuracy drops. 

To summarize, we make three contributions:
%\vspace{-0.5em}
\begin{enumerate*}
    \item We provide a new perspective towards automatic model size reduction: we define a neural network with certain DoF with random linear constraints. We discover that gradient descent can automatically solve constrained optimization for the best model with a faster convergence rate. This constrained neural optimization perspective is likely to benefit many other applications. % under the desired constraint.
    
    \item We propose the recurrent parameter generator (RPG), which decouples the network architecture and the network DoF. We can flexibly choose any desired DoF to construct the network given a specific neural network architecture. 

    \item By separating network architectures from parameters, RPG becomes a tool to understand the relationship between the model DoF and the network performance.  We observe an empirical log-linear DoF-Accuracy relationship.
\end{enumerate*}
\figtasks{t}

\vspace{-0.5em}
\section{Related Work}
\vspace{-0.5em}
Many works study model DoF reduction or compression.
We discuss each one and its relationship to our work.

\noindent \textbf{Model Pruning, Neural Architecture Search, and Quantization.}
Model pruning seeks to remove unimportant parameters in a trained model. Recently, it's proposed to use neural architecture search as coarse-grained model pruning \cite{yu2018slimmable, dong2019network}. Another related effort is  network quantization \cite{hubara2017quantized, rastegari2016xnor, louizos2018relaxed}, which seeks to reduce the bits used for each parameter and can frequently reduce the model size by $4\times$ with minimal accuracy drop. More recently, \cite{dollar2021fast} presents a framework for analyzing model scaling strategies that consider network properties such as FLOPs and activations.

\noindent \textbf{Parameter Regularization and Priors}. 
Regularization has been widely used to reduce model redundancy \cite{krogh1992simple, NowlanH92simplifying}, alleviate overfitting \cite{srivastava2014dropout, wan2013regularization}, and ensure desired mathematical regularity \cite{wang2020orthogonal}. RPG can be viewed as a parameter regularization in the sense that weight sharing poses many equality constraints to  weights and regularizes  weights to a low-dimensional space. HyperNeat \cite{stanley2009hypercube} and CPPNs \cite{stanley2007compositional} use networks to determine the weight between two neurons as a function of their positions. \cite{karaletsos2018probabilistic, KaraletsosB20} introduced a similar idea by providing a hierarchical prior for network parameters.

\noindent \textbf{Recurrent Networks and Deep Equilibrium Models.} Recurrence and feedback have been shown in psychology and neuroscience to act as modulators or competitive inhibitors to aid feature grouping \cite{gilbert2007brain}, figure-ground segregation \cite{hupe1998cortical} and object recognition \cite{wyatte2012limits}. Recurrence-inspired mechanisms also achieve success in feed-forward models. There are two main types of employing recurrence based on if weights are shared across recurrent modules. ResNet \cite{he2016deep}, a representative of reusing similar structures without weight sharing, introduces parallel residual connections and achieves better performance by going deeper in networks. Similarly, some works \cite{szegedy2015going,srivastava2015highway} also suggest iteratively injecting thus-far representations to the feed-forward network useful. Stacked inference methods \cite{ramakrishna2014pose,wolpert1992stacked,weiss2010structured} are also related while they consider each output in isolation. Some find sharing weights across recurrent modules valuable. They demonstrate applications in temporal modelling \cite{weiss2010structured,shi2015convolutional,karpathy2015deep}, spatial attention \cite{mnih2014recurrent,butko2009optimal}, pose estimation \cite{wei2016convolutional,carreira2016human}, and so on \cite{li2016iterative,zamir2017feedback}. Such methods usually shine in modeling long-term dependencies. In this work, we recurrently share weights across different layers of a feedback network to reduce network redundancy. 

Given stacking weight-shared modules improve the performance, researchers consider running even infinite depth of such modules by making the sequential modules converge to a fixed point \cite{lecun1988theoretical,bai2019deep}. Employing such \textit{equilibrium} models to existing networks, they show improved performance in many natural language processing \cite{bai2019deep} and computer vision tasks \cite{bai2020multiscale,wang2020implicit}. One issue with deep equilibrium models is that the forward and backward propagation usually takes much more iterations than explicit feed-forward networks. Some work \cite{fung2021fixed} improves the efficiency by making the backward propagation Jacobian free. Another issue is that \textit{infinite} depth and fixed point may not be necessary or even too strict for some tasks. 
Instead of achieving infinite depth, our model shares parameters to a certain level. We empirically compare with equilibrium models in Section \ref{sec:exp}.

\noindent \textbf{Efficient Network Space and Matrix Factorization.}
Convolution is an efficient and structured matrix-vector multiplication. Arguably, the most fundamental idea in building efficient linear systems is matrix factorization. Given the redundancy in deep convolutional neural network parameters, one can leverage the matrix factorization concept, e.g., factorized convolutions, and design more efficient network classes \cite{howard2017mobilenets, iandola2016squeezenet, tan2019efficientnet, sandler2018mobilenetv2}.

%\vspace{-0.5em}
\section{Recurrent Parameter Generator}
%\vspace{-0.5em}

\label{sec:method:ls}

\noindent {\bf Linearly Constrained Neural Optimization.} 
% Parameter generator can be considered  as optimizing neural networks with linear constraints.
Consider optimizing a network with input data $\mathbf{X}$, parameters $\hat{\mathbf{W}}$ and loss function $L$. The optimization can be written as: 
\begin{align}
    \min L(\mathbf{X}; \hat{\mathbf{W}} )\ \ \text{s.t.}\  \mathbf{\hat{\mathbf{W}}} = \boldsymbol{G} \mathbf{W} (\text{or equally} \ \boldsymbol{R}\mathbf{\hat{\mathbf{W}}} = 0)
\end{align}
where $\mathbf{\hat{\mathbf{W}}} = \boldsymbol{G} \mathbf{W}$ refers to a set of linear constraints, where $\boldsymbol{G}\in \Re^{N\times M}$ is a full-rank tall matrix (i.e. $N\ge M$). Here we refer to $\hat{\mathbf{W}}$ as the constrained parameters and $\mathbf{W}$ as the free parameters. This constraint is a change of variable, i.e., the constrained parameter $\hat{\mathbf{W}}$ is linearly generated from the free parameter $\mathbf{W}$ by generating matrix $\boldsymbol{G}$. We can consider $\mathbf{W}$ as a compressed model, which is unpacked into $\hat{\mathbf{W}}$ to construct the large neural network. $\mathbf{W}$ is directly optimized via gradient descent and free to update. In this linearly constrained neural optimization, the model DoF is equivalent to $M$, which is the dimension of $\mathbf{W}$. An equivalent form of the constraint $\mathbf{\hat{\mathbf{W}}} = \boldsymbol{G} \mathbf{W}$ is $\boldsymbol{R}\mathbf{\hat{\mathbf{W}}} = 0$, where $\boldsymbol{R}\in\Re^{(N-M)\times N}$ can be derived from SVD of $\boldsymbol{G}$.

\noindent {\bf Recurrent Parameter Generator.} Let's assume that we construct a deep convolutional neural network containing $L$ different convolution layers. Let $\mathbf{K}_1,\mathbf{K}_2,\dots, \mathbf{K}_L$ be the corresponding $L$ convolutional kernels\footnote{A kernel contains all the filters of one layer. In this paper, we treat each convolutional kernel as a vector. When the kernel is used to do the convolution, it will be reshaped into the corresponding shape.}. 
Rather than using separate sets of parameters for different convolution layers, we create a single set of parameters $\mathbf{W}\in\Re^M$ and use it to generate the corresponding parameters $\hat{\mathbf{W}} = \left[\mathbf{K}_1^T,\mathbf{K}_2^T,\dots, \mathbf{K}_L^T\right]^T \in \Re^N$ for each convolution layer:
\begin{equation}
\mathbf{K}_i = \boldsymbol{G}_i\cdot \mathbf{W}, i \in \{1,\dots,L\}
\label{eq:k}
\end{equation}
where $\boldsymbol{G}_i$ is a fixed predefined generating matrix, which is used to generate $\mathbf{K}_i$ from $\mathbf{W}$.
We call $\boldsymbol{G} = \left[\boldsymbol{G}_1^T,\dots,\boldsymbol{G}_L^T\right]^T$ and $\mathbf{W}$ the \textit{recurrent parameter generator} (RPG). In this work, we always assume that the size of $\mathbf{W}$ is not larger than the total parameters of the model, i.e., $|\mathbf{W}|\leq\sum_{i}{|\mathbf{K}_i|}$. This means an element of $\mathbf{W}$ will generally be used in more than one layer of a neural network. Additionally, the gradient of $\mathbf{W}$ is a linear superposition of the gradients from each convolution layer.
During the neural network training, let's assume convolution kernel $\mathbf{K}_i$ receives gradient $\frac{\partial \ell}{\partial \mathbf{K}_i}$, where $\ell$ is the loss function. Based on the chain rule, it is clear that the gradient of $\mathbf{W}$ is:
\begin{equation}
\frac{\partial \ell}{\partial \mathbf{W}} = \sum_{i=1}^{L}{\boldsymbol{G}^T_i\cdot \frac{\partial \ell}{\partial \mathbf{K}_i}}
\label{eq:grad_w}
\end{equation}
\noindent {\bf Generating Matrices and Destructive Weight Sharing.} There are various ways to create the generating matrices $\{\boldsymbol{G}_i\}$. While in general $\boldsymbol{G}$ can be any full-rank tall matrix, this paper focuses on the destructive generating matrices, which are random orthogonal matrices and could prevent different kernels from sharing the representation during weight sharing. Random generating matrices empirically improve the model capacity when the model DoF is fixed. We provide an intuitive theoretical explanation of how random orthogonal matrices prevent representation sharing as follows.

For easier discussion, let us consider a special case, where all of the convolutional kernels have the same size and are used in the same shape in the corresponding convolution layers. The dimension of $\mathbf{W}$ is equal to that of one convolutional layer kernel. In other words, $\{\boldsymbol{G}_i\}$ are square matrices, and the spatial sizes of all of the convolutional kernels have the same size, $d_{in}\times d_{out}\times w\times h$, and the input channel dimension $d_{in}$ is always equal to the output channel dimension $d_{out}$. In this case, a filter $\mathbf{f}$ in a kernel can be treated as a vector in $\Re^{dwh}$. Further, we choose $\boldsymbol{G}_i$ to be a block-diagonal matrix $\boldsymbol{G}_i=\text{diag}\{\boldsymbol{A}_i,\boldsymbol{A}_i,\dots,\boldsymbol{A}_i\}$, where $\boldsymbol{A}_i\in O(dwh)$ is an orthogonal matrix that generates each filter of the kernel $\mathbf{K}_i$ from $\mathbf{W}$, and $O(\cdot)$ denotes the orthogonal group. Similar to the Proposition~2 in \cite{CheungPSP19}, we show in the Appendix \ref{sec:supp2} that: if $\boldsymbol{A}_i$, $\boldsymbol{A}_j$ are sampled from the $O(dwh)$ Haar distribution and $\mathbf{f}_i$, $\mathbf{f}_j$ are the corresponding filters (generated by $\boldsymbol{G}_i$, $\boldsymbol{G}_j$ respectively from the same set of entries of $\mathbf{W}$) from $\mathbf{K}_i$, $\mathbf{K}_j$ respectively, then we have $\E\left[\langle \mathbf{f}_i,\mathbf{f}_j \rangle\right] = 0$ and $\E \left[\langle \frac{\mathbf{f}_i}{\|\mathbf{f}_i\|},\frac{\mathbf{f}_j}{\|\mathbf{f}_j\|}\rangle^2 \right]=\frac{1}{dwh}$. Since $dwh$ is usually large, the corresponding filters from $\mathbf{K}_i$, $\mathbf{K}_j$ are close to orthogonal and generally dissimilar. This shows that even when $\{\mathbf{K}_i\}$ are generated from the same entries of $\mathbf{W}$, they are prevented from sharing the representation.

Though $\{\boldsymbol{G}_i\}$ are not updated during training, the size of $\boldsymbol{G}_i$ can be quite large in general, which can create additional computation and storage overhead. In practice, we can use permutation and element-wise random sign reflection to construct a subset of the orthogonal group as permutations and sign reflections could be implemented with high simplicity and negligible cost. A simple demonstration of $\{\boldsymbol{G}_i\}$ is demonstrated in Fig.\ref{fig:modelring_}{\bf U}\footnote{Permutations and element-wise random sign reflection conceptually are subgroups from the orthogonal group, but we shall never use them in the matrix form for the obvious efficiency purpose.}. Since pseudo-random numbers are used, it takes only two random seeds to store a random permutation and an element-wise random sign reflection.

\noindent \textbf{Even Parameter Sampling and Model Ring.}
While it is easy to randomly sample elements from $\mathbf{W}$ when generating parameters for each layer, it may not be optimal as some elements in $\mathbf{W}$ may not be evenly used, and some elements in $\mathbf{W}$ used at all due to sampling fluctuation. 
A simple equalization technique can be used to guarantee all elements of $\mathbf{W}$ are evenly sampled.
Suppose the size of $\mathbf{W}$ is $M$, and the size of parameter $\mathbf{\hat{\mathbf{W}}}$  of the model to be generated is $N$, $N>M$. As we mentioned earlier, there are $L$ layers and they require $\{\|K_1\|,\dots,\|K_L\|\}$ parameters respectively. As $N>M$, we can use $W$ as a ring: we first draw the first $\|K_1\|$ parameters from $\mathbf{\hat{\mathbf{W}}}$ followed by a pre-generated random permutation $p_1$ and a  pre-generated random element-wise sign flipping $b_1$ to construct layer-1 kernel $\mathbf{K}_1$. Then we draw the next $\|K_2\|$ parameters from $\mathbf{\hat{\mathbf{W}}}$ followed by pre-generated random permutation $p_2$ and a pre-generated random element-wise sign flipping $b_2$. We continue this process and wrap around when there is not enough entries left from $\mathbf{\hat{\mathbf{W}}}$. We refer to $\mathbf{\hat{\mathbf{W}}}$ together with this sampling strategy as \textit{model rings} since the free parameters are recurrently used in a loop. We illustrate the general parameter generator in Fig.\ref{fig:modelring_}{\bf U} and RPG in Fig.\ref{fig:modelring_}{\bf L}. This For data saving efficiency, we just need to save several random seed numbers instead of saving the pre-generated permutations $\{p_1,\dots,p_L\}$ and sign flipping operations $\{b_1,\dots,b_L\}$.

\noindent \textbf{Batch Normalization.} Model performance is relatively sensitive to the batch normalization parameters. For better performance, each convolution layer needs to have its own batch normalization parameters. In general, however, the size of batch normalization is relatively negligible. Yet when $\mathbf{W}$ is extremely small (e.g., $36$K parameters), the size of batch normalization should be considered.

\iffalse
\begin{figure}[t]
    \centering
 \includegraphics[width=1\columnwidth]{}
   \vspace{-1em}
   \caption{ \small Recurrent parameter generators at multiple scales. \textbf{Upper:} A global RPG is used for generating convolution kernels for the entire ResNet18. \textbf{Lower }: Four local RPGs are each responsible for generating convolution kernels within each corresponding building block of the ResNet18.}  %\captionof{figure}{A figure}
  \label{fig:msparam}
  \vspace{-1.5em}
\end{figure}
\fi

 % \vspace{-1em}
\section{RPG at Multiple Scales}
 % \vspace{-1em}

We discuss the general idea of parameter generators where only one RPG is shared globally across all layers previously. 
We could also create several local RPGs, each of which is shared at certain scales, such as blocks and sub-networks. Such RPGs may be useful for certain applications such as recurrent modeling. %Experimental results are reported in Section \ref{sec:cifar_exp}.

\label{sec:extension}
\noindent \textbf{RPGs at Block-Level.}
Many existing network architectures reuse the same design of network blocks multiple times for higher learning capacity, as discussed in the related work. 
Instead of using one global RPG for the entire network, we could alternatively create several RPGs that are shared within certain network blocks.
We take Res18 \cite{he2016deep} as a concrete example.% (Fig.\ref{fig:msparam}). 
Res18 has four building blocks. Every block has 2 residual convolution modules.
We create four local RPGs for Res18.
Each RPG is shared within the corresponding building block, where the size of the RPG is flexible and can be determined by users. Fig.\ref{fig:tasks}{\bf M}) illustrates how RPGs can be shared at the block-level.

\noindent \textbf{RPGs at Sub-Network-Level.} 
Reusing sub-networks, or recurrent networks, has achieved success in many tasks as they iteratively refine and improve the prediction.
Parameters are often shared when reusing the sub-networks.
This may not be optimal as sub-networks at different stages iteratively improve the prediction, and shared parameters may limit the learning capacity at different stages.
However, not sharing parameters at all greatly increases the model size.
RPG can be created for each sub-network. Such design leads to a much smaller DoF, while parameters of different sub-networks are orthogonal by undergoing destructive changes.
We show applications of sub-network-level RPGs for pose estimation and multitask regression (Section \ref{sec:pose} and \ref{sec:multi}). Fig.\ref{fig:tasks}{\bf R}) illustrates sub-network-level RPGs.
%\vspace{-1em}
\section{Experimental Results}

\label{sec:exp}
We evaluate the performance of RPG with various tasks illustrated in Fig.\ref{fig:tasks}. 
For classification, RPG was used for the entire network except for the last fully-connected layer. We discuss performance with regard to \textit{backbone DoF}, the actual number of parameters of the backbone. For example, Res18 has $11$M backbone parameters and $512$K fc parameters, and RPG was applied to reduce $11$M backbone DoF only.

%\vspace{-1em}
\subsection{CIFAR Classification}
%\vspace{-0.5em}

\label{sec:cifar_exp}

\noindent \textbf{Implementation Details}. CIFAR experiments use 128 batch size, 5e-4 weight decay,  initial learning rate of 0.1 with gamma of 0.1 at epoch 60, 120 and 160. We use Kaiming initialization \cite{he2015delving} with adaptive scaling. Shared parameters are initialized with a particular variance and scale the parameters for each layer to make it match the Kaiming initialization.

\begin{table}[b]
\vspace{-1em}
\parbox{\linewidth}{
\vspace{-0.03in}
\caption{ \small RPG compared with multiscale deep equilibrium models (MDEQ) \cite{bai2020multiscale} on CIFAR10 and CIFAR100 classification. At the same number of model DoF, RPG achieves 3\% - 6\% performance gain with 15 - 25x less inference time. Inference time is measured by milliseconds per image.}
\label{tab:cifar-deq}
\centering
%\LARGE
\vspace{-0.7em}
\resizebox{\linewidth}{!}{
\begin{tabular}{l|c|c|c|c}
\hline
%\rowcolor{LightCyan}
                          &  & \multicolumn{3}{c}{ {\bf Our RPG (same DoF)} }    \\\cline{3-5} 
  \multirow{-2}{*}{ {\bf Accuracy} (\%)}  &   \multirow{-2}{*}{\bf MDEQ}                & {\bf 2x MS blk} & {\bf 3x MS blk} & {\bf 4x MS blk} \\ \hline
{\bf CIFAR10 }   & 85.1                  & 88.5         & 90.1         & \textbf{90.9}         \\ \hline
{\bf CIFAR100}   & 59.8                  & 62.8         & 64.7         & \textbf{65.7}         \\ \hline\hline
{\bf Inference time (ms)} & 3.15                 & \textbf{0.12}         & 0.18         & 0.22         \\ \hline
\end{tabular}}
}
\vfill
\vspace{0.5em}
\parbox{\linewidth}{
\centering
%\LARGE
\caption{  \small ResNet-RPG outperforms existing DoF reduction methods \cite{han2015deep,chen2015compressing,yang2019legonet} on CIFAR100.
Additionally, a global RPG outperforms block-wise local RPGs.
}
\label{tab:cifar-block}
\vspace{-0.7em}
\resizebox{0.7\linewidth}{!}{\begin{tabular}{l|c|c}
\hline  & {\bf DoF} & {\bf Acc. (\%)} \\ \hline
{\bf R18-vanilla}             & 11M               & 77.5      \\ \hline
{\bf R34-RPG.blk} & 11M               & 78.5      \\ \hline
{\bf R34-RPG}            & 11M               & \textbf{78.9}      \\ \hline \hline
{\bf R34-random weight share}& 11M               & 74.9     \\\hline
{\bf R34-DeepCompression} \cite{han2015deep}& 11M               & 72.2    \\\hline
{\bf R34-Hash} \cite{chen2015compressing}& 11M               & 75.6     \\\hline
{\bf R34-Lego} \cite{yang2019legonet}& 11M               & 78.4     \\\hline\hline
{\bf R34-vanilla}              & 21M% 21,258,240 
& \textbf{79.1}      \\ \hline
\end{tabular}
}
}
\end{table}

\noindent \textbf{Compared to Deep Equilibrium Models}. As a representative of implicit models, deep equilibrium models \cite{bai2019deep}  reduce model DoF by finding fix points via additional optimizations. We compare the image classification accuracy on CIFAR10 and CIFAR100, as well as the inference time on CIFAR100 (Table \ref{tab:cifar-deq}). Following the settings of MDEQ \cite{bai2020multiscale}, an image was sequentially fed into the initial convolutional block, the multi-scale deep equilibrium block (dubbed as \textit{MS} block), and the classification head. 
MDEQ \cite{bai2020multiscale} achieves \textit{infinite} MS blocks by finding the fixed point of the MS block.  We reuse the MS block two to four times without increasing the model DoF. RPG achieves 3\% - 6\% gain on CIFAR10 and 3\% - 6\% gain on CIFAR100. RPG inference time is 15 - 25 times smaller than MDEQ since MDEQ needs additional time to solve equilibrium during training.

\begin{figure}
    \centering
   \includegraphics[width=\columnwidth]{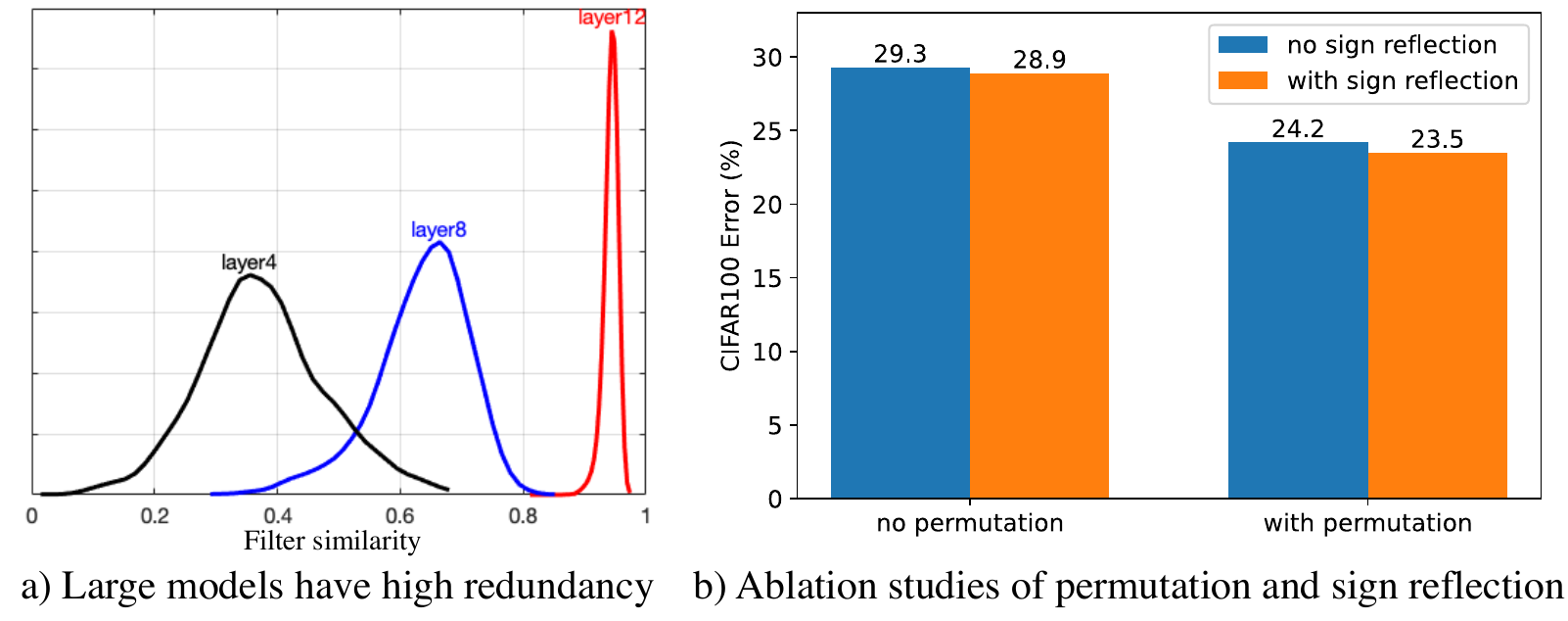}
    \caption{\small {\bf a)} Large models are known to have high redundancy and low degree of freedom (DoF). They could be pruned to small models, e.g. high filter similarity of different layers in VGG16 is observed. {\bf b)} Ablation studies of permutation and sign reflection of Res34-RPG. Having both matrices gives the highest performance.}
    \label{fig:combo1}
      \vspace{-1.5em}
\end{figure}

\iffalse
\begin{figure*}[t!]
\centering
\resizebox{0.29\textwidth}{!}{\begin{minipage}{.5\textwidth}
  \centering
   % \vspace{-1em}
 \includegraphics[width=0.8\columnwidth]{}
\caption{ \small Large models are known to have high redundancy and low degree of freedom (DoF). They could be pruned to small models, e.g. high filter similarity of different layers in VGG16 is observed.}
\label{fig:filter_sim}
\end{minipage}}
\hfill
\resizebox{0.32\textwidth}{!}{\begin{minipage}{.5\textwidth}
  \centering
    \vspace{-1em}
 \includegraphics[width=\columnwidth]{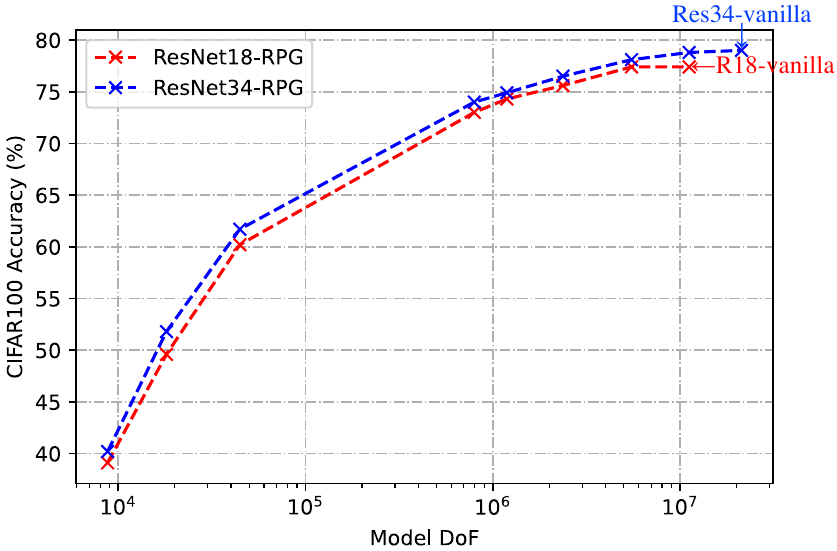}
\caption{ \small Log-linear DoF-accuracy relationship of CIFAR100 accuracy and model DoF on CIFAR100. 
RPG achieves the same accuracy as vanilla ResNet with 50\% DoF.}
\label{fig:cifar-nparam}
\end{minipage}}
\hfill
\resizebox{0.35\textwidth}{!}{\begin{minipage}{.5\textwidth}
  \centering
  \vspace{-0.2in}
    \vspace{-1em}
    \includegraphics[width=0.95\columnwidth]{}
   \caption{  \small Ablation studies of permutation and sign reflection of Res34-RPG. Having both matrices gives the highest performance.}
   \label{fig:ablation}
     \vspace{-2em}

\end{minipage}}
\vspace{-1.2em}
\end{figure*}
\fi

\noindent \textbf{Global RPG with Varying Model DoF.} 
We create one global RPG to generate parameters for convolution layers of ResNet and refer to it as \textit{ResNet-RPG}.
We report CIFAR100 top-1 accuracy of ResNet-RPG18 and ResNet-RPG34 at different model DoF (Table \ref{tab:imagenet} and Fig.\ref{fig:cifar-nparam} in Appendix \ref{sec:cifar_supp}).
Compared to ResNet, ResNet-RPG achieves higher accuracy at the same model DoF. Specifically, we achieve 36\% CIFAR100 accuracy with only 8K backbone DoF. Further, ResNet34-RPG achieves higher accuracy than ResNet18-RPG, indicating increasing time complexity gives performance gain. We observe log-linear DoF-accuracy relationship, with details in \textit{Power Law} of the following subsection.

\noindent \textbf{Local RPGs at the Block-Level}. 
In the previous Res-RPG experiments, we use one global RPG for the entire network.  
We also evaluate the performance when RPGs are shared locally at a block level, as discussed in Section \ref{sec:multi}.
In Table \ref{tab:cifar-block}, compared to plain ResNet18 at the same DoF, our block-level RPG network gives 1.0\% gain. In contrast, our ResNet-RPG (parameters are evenly distributed) gives a 1.4\% gain.
Using one global RPG where parameters of each layer are evenly distributed is 0.4\% higher than multiple RPGs. 

\noindent \textbf{Comparison to Baselines.} Table \ref{tab:cifar-block} compares RPG and other model DoF reduction methods including random weight sharing, weight sharing with the deep compression \cite{han2015deep}, hashing trick \cite{chen2015compressing} and weight sharing with Lego filters \cite{yang2019legonet}. We also compare with HyperNetworks \cite{ha2016hypernetworks} in Appendix \ref{sec:hypernet}. At the same model DoF, RPG outperforms all other baselines, demonstrating the effectiveness of the proposed method. 

\noindent \textbf{RPG for Transformers.} We apply RPG for a vision transformer ViT \cite{dosovitskiy2020image} and report results in %Table \ref{tab:vit} and
Fig.\ref{fig:combo2}{\bf a}. Specifically, the ViT-tiny model with 6 transformer layers, 4 attention heads and 64 embedding dimensions, is used as a baseline. A log-linear relationship is also identified in ViT-RPG.

\subsection{ImageNet Classification}
\vspace{-0.5em}

\noindent \textbf{Implementation Details}. All ImageNet experiments use batchsize of 256, weight decay of 3e-5, and an initial learning rate of 0.3 with gamma of 0.1 every 75 epochs and 225 epochs in total. Our schedule is different from the standard schedule as the weight-sharing mechanism requires different training dynamics. We tried a few settings and found this one to be the best for RPG.

\noindent \textbf{RPG with Varying Model DoF.} We use RPG with different DoF for ResNet and report the top-1 accuracy (Table \ref{tab:imagenet} and Fig.\ref{fig:teaser}\textbf{e})). ResNet-RPGs consistently achieve higher performance than ResNets under the same model DoF. Specifically, ResNet-RPG34 achieves the same accuracy 73.4\% as ResNet34 with only half of ResNet34 backbone DoF. ResNet-RPG18 also achieves the same accuracy as ResNet18 with only half of ResNet18 backbone DoF. Further, RPG networks have higher generalizability (Section \ref{sec:general}).

\begin{table}[t!]
\centering
\LARGE
\caption{ \small ResNet-RPG consistently achieves higher performance at the same model DoF. We report ImageNet and CIFAR100 top-1 accuracy and backbone DoF for ResNet-vanilla and ResNet-RPG. }
\vspace{-0.3em}
\label{tab:imagenet}
%\vspace{0.2in}
\setlength{\tabcolsep}{20pt}
\resizebox{\columnwidth}{!}{\begin{tabular}{l|c|c|c|c|c|c|c|c}
\hline
         {\bf Acc. (\%)}      & \multicolumn{3}{c|}{ \bf R18-RPG} &  {\bf R18-vanilla} & \multicolumn{3}{c|}{ \bf R34-RPG} & {\bf R34-vanilla} \\ \hline

{\bf ImageNet }            & 40.0             & 67.2 & 70.5   & 70.5          & 41.6       & 69.1      & \textbf{73.4}   & \textbf{73.4}   \\ \hline
{\bf CIFAR100  }        &60.2              & 75.6& 77.6    & 77.6        & 61.7       & 76.5      & \textbf{78.9}   &\textbf{79.1}   \\ \hline\hline
{\bf Model DoF }          & 45K              & 2M        & 5.5M   & 11M    & 45K         & 2M        & 11M    & 21M     \\ \hline 
\end{tabular}}
\vspace{-1.5em}

\end{table}

\noindent \textbf{Power Law.} Empirically, accuracy and model DoF follow a power law, when RPG DoF is lower than 50\% ResNet-vanilla DoF (Fig.\ref{fig:teaser}{\bf d}).
The exponents of the power laws are the same for ResNet18-RPG and ResNet34-RPG on ImageNet.
The scaling law may be useful for estimating the network accuracy without training the network.
Similarly, \cite{henighan2020scaling} also identifies a power law for accuracy and model DoF of transformers.
The proposed RPG enables {\it under-parameterized} models for large-scale datasets such as ImageNet, which may unleash more new studies and findings.

\iffalse
\textbf{ResNet Experiment logs}.

\begin{itemize}
    \item Resnet18 (baseline) IMAGENET: Top1 val acc: $69.80 \%$, Top5 val acc $89.18 \%$:
    \item Resnet34 (baseline) IMAGENET: Top1 val acc: $73.4 \%$, Top5 val acc: $91.4 \%$
    
    \item ResNet-rpg18 (512x512x3x3) IMAGENET: 3E-5 wd, 0.3*0.1**(ep/75.) for 250 epochs learning schedule, new initialization, final accuracy on validation set: Acc@1 67.2\% Acc@5 87.4\%
    
    \item ResNet-rpg34 (512x512x3x3) IMAGENET: ResNet-rpg34, 5E-5 wd, 0.3*0.1**(ep/75.) for 250 epochs learning schedule, final accuracy on validation set:  Acc@1 69.1\% Acc@5 88.7\%
    
    \item ResNet-rpg34 (11.16M, 512x3x3x2422) IMAGENET: Acc@1: 73.4\%, Acc@5 91.2\%
    
\end{itemize}
\fi

\vspace{-0.5em}
\subsection{Pose Estimation}
\vspace{-0.5em}

\label{sec:pose}

\begin{table}[b]
\parbox{\linewidth}{

\centering
\LARGE
\caption{ \small RPG outperforms CPM \cite{wei2016convolutional} at the same DoF. We report pose estimation performance (model DoF) on MPII human pose compared with CPM \cite{wei2016convolutional}. The metric is PCKh@0.5. }
\label{tab:pck}
\vspace{-0.5em}
\resizebox{\linewidth}{!}{\begin{tabular}{l|c|c|c}
\hline
    {\bf Acc. (DoF)}  & {\bf CPM} \cite{wei2016convolutional}  & {\bf RPG}   & {\bf No shared w.} \\ \hline
{\bf 1x sub-net}  & \multicolumn{3}{c}{84.7 (3.3M)}         \\ \hline
{\bf 2x sub-nets} & 86.1 (3.3M) & 86.5 (3.3M) & 87.1 (6.7M)  \\ \hline
{\bf 4x sub-nets} & 86.5 (3.3M) & 87.3 (3.3M) & 88.0 (13.3M) \\ \hline
\end{tabular}}

}
\vfill
\vspace{0.3em}
\parbox{\linewidth}{
\centering
\LARGE
\caption{ \small RPG achieves the best accuracy without sharing batch normalize parameters and with permutation and sign reflection. We report multitask regression errors on S3DIS with sub-net architecture as  \cite{ramamonjisoa2019sharpnet}. % \cite{armeni2017joint}.
Lower is better. All methods share the same DoF. Sub-net is reused once.}
\label{tab:multitask}
\vspace{-0.5em}
\resizebox{\linewidth}{!}{\begin{tabular}{l|c|c}
\hline
    {\bf RMSE} (\%) & {\bf Depth} & {\bf Normal} \\ \hline
{\bf Vanilla model} & 25.5     & 41.0      \\ \hline
{\bf RPG with shared BN}                                & 24.7     & 40.3      \\ \hline
{\bf Reuse \& new BN}                           & 24.0     & 39.4     \\ \hline
{\bf Reuse \& new BN \& perm. and reflect.} & \textbf{22.8}     & \textbf{39.1}      \\ \hline
\end{tabular}}
}
\end{table}
\noindent \textbf{Implementation Details.}
We superpose sub-networks for pose estimation with a globally shared RPG. %(Fig.\ref{fig:pose}).
Hourglass networks \cite{newell2016stacked} are used as the backbone.
An input image is first fed to an initial convolution block to obtain a feature map, which is then fed to multiple stacked pose estimation sub-networks. Each sub-network outputs a pose estimation prediction, which is penalized by the pose estimation loss. 
Convolutional pose machine (CPM) \cite{wei2016convolutional} share all sub-networks weights. 
We create one global RPG to generate parameters for each sub-network. Our model size is set to the same as CPM. We also compare with larger models where parameters of sub-networks are not shared.

We evaluate on MPII Human Pose dataset \cite{andriluka20142d}, a benchmark for articulated human pose estimation, which consists of over 28K training samples over 40K people with annotated body joints. We use the hourglass network \cite{newell2016stacked} as backbone and follow all their settings. 

\noindent \textbf{Results and Analysis.} We report the Percentage of Correct Key-points at 50\% threshold (PCK@0.5) of different methods in Table \ref{tab:pck}. 
CPM \cite{wei2016convolutional} share all parameters for different sub-networks. We use one RPG that is shared globally at the same size as CPM. For reference, we also compare with the no-sharing model as the performance ceiling.
Adding the number of recurrences leads to performance gain for all methods. 
At the same model size, RPG achieves higher PCK@0.5 compared to CPM. Increasing the number of parameters by not sharing sub-network parameters also leads to some performance gain.

\iffalse
\begin{figure}[t]\centering
  \vspace{-0.12in}
    \includegraphics[width=0.62\columnwidth]{}
\caption{ Pose estimation  with sub-network RPG.
   An input image is fed to an initial convolutional block and $N$ stacked pose estimation networks. Each sub-network predicts the pose that are penalized by corresponding loss. Parameters of each sub-network are generated from a global RPG.}
\label{fig:pose}
\end{figure}
\fi

\begin{figure*}
    \centering
   \includegraphics[width=\textwidth]{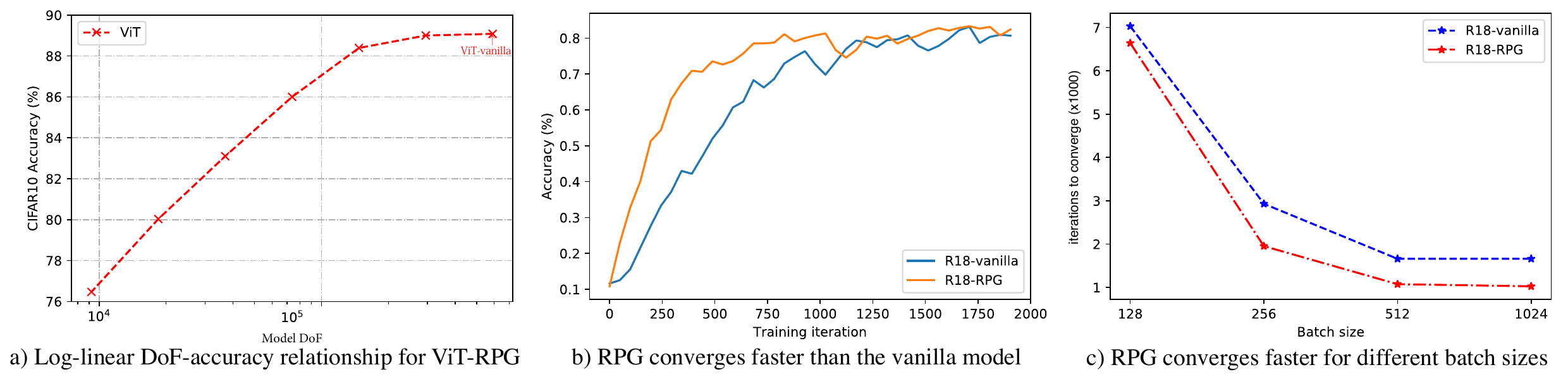}
    \caption{\small {\bf a)} A log-linear DoF-accuracy relationship exists for RPGs applied to vision transformer ViT \cite{dosovitskiy2020image}. {\bf b)} RPG converges faster than the vanilla model. We plot the CIFAR10 accuracy (smoothed by moving average) versus training iterations for Res18-vanilla and Res18-RPG. RPG converges at 1k iterations while the vanilla model converges at 1.7k. {\bf c)} RPG consistently converges faster. The reduction becomes substantial with the increasing batchsize, e.g., at batchsize 1024, RPG takes $41\%$ less iterations to converge. Denote final accuracy as $P_f$, the convergence iteration is defined when current smoothed accuracy (by moving average) is within 5\% range of $P_f$.}
    \label{fig:combo2}
\end{figure*}

\iffalse
\begin{figure*}[t!]
\centering
\resizebox{0.33\textwidth}{!}{\begin{minipage}{.5\textwidth}
  \centering
    \vspace{-1em}
 \includegraphics[width=0.9\columnwidth]{}
\caption{ \small A log-linear DoF-accuracy relationship exists for RPGs applied on vision transformer ViT \cite{dosovitskiy2020image}.
}
\label{fig:vit}
\end{minipage}}
\hfill
\resizebox{0.32\textwidth}{!}{\begin{minipage}{.5\textwidth}
  \centering
  \vspace{-0.2in}
    \vspace{-1em}
    \includegraphics[width=\columnwidth]{}
   \caption{  \small RPG converges faster than the vanilla model. We plot the CIFAR10 accuracy (smoothed by moving average) versus training iterations for Res18-vanilla and Res18-RPG. RPG converges at 1k iterations while vanilla model converges at 1.7k.
   }
   \label{fig:converge1}
\end{minipage}}
\hfill
\resizebox{0.32\textwidth}{!}{\begin{minipage}{.5\textwidth}
  \centering
  \vspace{-0.2in}
    \vspace{-1em}
    \includegraphics[width=\columnwidth]{}
   \caption{  \small RPG consistently converges faster than the vanilla R18. The reduction becomes substantial with the increasing batchsize, e.g. at batchsize 1024, RPG takes $41\%$ less iterations to converge. Denote final accuracy as $P_f$, the convergence iteration is defined when current smoothed accuracy (by moving average) is within 5\% range of $P_f$.}
   \label{fig:converge2}
\end{minipage}}
\vspace{-2em}
\end{figure*}
\fi
\vspace{-0.5em}
\subsection{Multi-Task Regression}
\vspace{-0.5em}

\label{sec:multi}

\iffalse
\begin{figure}
    \centering
 \includegraphics[width=\columnwidth]{}
   \caption{\small Multi-task regression  with (sub-)network RPG. 
   An input image is fed to an initial network $G$ and $N$ stacked network $F_i$. Each network outputs depth and normal maps which are penalized by corresponding loss.
   We use multiple RPGs that are shared at corresponding building blocks of the network $F_i$.
   Note that network $G$ and $F_i$ do not share parameters as they take different inputs.}  %\captionof{figure}{A figure}
  \label{fig:multitask}
\end{figure}
\fi

\noindent \textbf{Implementation Details.} We superpose sub-networks for multi-task regression with multiple RPGs at the building-block level.
We  focus on predicting depth and normal maps from a given image. We stack multiple SharpNet \cite{ramamonjisoa2019sharpnet}, a network for monocular depth and normal estimation. 
Specifically, we create multiple RPGs at the SharpNet building-block level. That is, parameters of corresponding blocks of different sub-networks are generated from the same RPG.

We evaluate the monocular depth and normal prediction performance on a 3D indoor scene dataset \cite{armeni2017joint}, which contains over 70K images with corresponding depths and normals covering over 6,000 $\text{m}^2$ indoor area.
We follow all settings of SharpNet \cite{ramamonjisoa2019sharpnet}, a SOTA monocular depth and normal estimation method.

\noindent \textbf{Results and Analysis.} We report the mean square errors for depth and normal estimation in Table \ref{tab:multitask}. Compared to one-time inference without recurrence, our RPG network gives 3\% and 2\% gain for depth and normal estimation, respectively. Directly sharing weights but using new batch normalization layers decrease the performance by 1.2\% and 0.3\% for depth and normal. Sharing weights and normalization layers further decrease the performance by 0.7\% and 0.9\% for depth and normal.

\vspace{-0.5em}
\subsection{Pruning RPG}
\vspace{-0.5em}

\begin{table}[t]
\parbox{\linewidth}{
\vspace{-0.03in}
\caption{ \small RPG achieves higher post-pruning CIFAR10 accuracy and similar post-pruning accuracy drops as SOTA fine-grained pruning approach IMP \cite{frankle2019stabilizing}. Fine-grained pruning is used for reducing DoF.
}
\label{tab:fprune}
\centering
\LARGE
\vspace{-0.5em}
\resizebox{\linewidth}{!}{
\begin{tabular}{l|c|c|c|c}
\hline

                           & {\bf acc before} & {\bf acc after ↓ DoF} & {\bf acc drop} & {\bf model DoF} \\ \hline
{\bf R18-IMP} \cite{frankle2019stabilizing} & 92.3       & 90.5               & 1.8      & 274k      \\ \hline
{\bf R18-RPG} & 95.0       & 93.0               & 2.0      & 274k      \\ \hline
\end{tabular}}
}
\vfill
\vspace{0.5em}
\parbox{\linewidth}{
\centering
\LARGE
\caption{ \small RPG achieves similar post-pruning ImageNet performance as SOTA coarse-grained apporach Knapsack \cite{aflalo2020knapsack} at the same FLOPs. Coarse-grained pruning is used for reducing RPG FLOPs.
%Comparison with coarse-grained pruning for reducing FLOPs and inference speed.RPG achieves similar performance as Knapsack \cite{aflalo2020knapsack} on ImageNet classification.
}
\label{tab:cprune}
\vspace{-0.5em}
\resizebox{\linewidth}{!}{\begin{tabular}{l|c|c|c} \hline
   & {\bf DoF before pruning} & {\bf Pruned acc.} & {\bf FLOPs}  \\ \hline
{\bf R18-Knapsack}   & 11.2M                    & 69.35\%     & 1.09e9 \\ \hline
{\bf Pruned R18-RPG} & 5.6M                     & 69.10\%     & 1.09e9 \\ \hline
\end{tabular}
}
}
\vspace{-1em}
\end{table}

\noindent \textbf{Fine-Grained Pruning}. 
Fine-grained pruning methods aim to reduce the model DoF by sparsifying weight matrices.
Such methods usually do not reduce the inference speed, although custom algorithms \cite{sgk_sc2020} may improve the speed.
%Fine-grained pruning methods aim at reducing the model parameters, and generally cannot reduce the inference speed in practice. 
%RPG models can be directly compared to such methods as they share the same goal of reducing the number of parameters. 
At the same model DoF, RPG outperforms state-of-the-art fine-grained pruning method IMP \cite{frankle2019stabilizing}. Accuracy drops of RPG and IMP are similar, both around 2\% (Table \ref{tab:fprune}).
It is worth noting that although IMP has no run time improvement in regular settings,
it could save inference time with customized sparse GPU kernels \cite{sgk_sc2020}.

\noindent \textbf{Coarse-Grained Pruning}. While RPG is not designed to reduce FLOPs, it can be combined with coarse-grained pruning to reduce FLOPs. We prune RPG filters with the lowest $\ell_1$ norms. Table \ref{tab:cprune} shows that the pruned RPG achieves on-par performance as state-of-the-art coarse-grained pruning method Knapsack \cite{aflalo2020knapsack} at the same FLOPs.

\vspace{-0.5em}
\subsection{Analysis} 
\vspace{-0.5em}

\label{sec:general}

\begin{table*}[t]
%\vspace{-1em}
    \caption{ \small RPG increases the model generalizability. \textbf{(a)} ResNet-RPG has lower training-validation accuracy gap on ImageNet classification. The metric is training accuracy minus validation accuracy. Lower is better. \textbf{(b)} Using RPG for pose estimation also decreases the training and validation performance GAP.  The metric is training PCK@0.5 minus validation PCK@0.5. Lower is better. \textbf{(c)} ResNet with RPG has higher performance on out-of-distribution dataset ObjectNet \cite{barbu2019objectnet}. The model is trained on ImageNet only and directly evaluated on ObjectNet. }
        \vspace{-1em}

\label{tab:general}
    \begin{subtable}{.24\linewidth}
      \centering
        \caption{IN train-val gap}
\resizebox{\textwidth}{!}{\begin{tabular}{l|c|c}
\hline
{\bf Acc gap (\%)} & {\bf vanilla} & {\bf RPG} \\ \hline
{\bf R18} & -0.7   & {\bf -2.7}        \\ \hline
{\bf R34} & 1.1    & {\bf -2.3}        \\ \hline
\end{tabular}}
    \end{subtable}%
    \hfill
    \begin{subtable}{.38\linewidth}
      \centering
        \caption{Pose train-val gap}
\resizebox{\textwidth}{!}{
\begin{tabular}{l|c|c|c}
\hline
{\bf Acc gap (\%)} & {\bf no shared w} & {\bf shared w}  & {\bf RPG} \\ \hline
{\bf 2x sub-nets} & 1.15       & 1.13    & {\bf 0.64}      \\ \hline
{\bf 4x sub-nets} & 1.98       & 1.70    & {\bf 1.15}      \\ \hline
\end{tabular}}
    \end{subtable} 
    \hfill
    \begin{subtable}{.3\linewidth}
      \centering
        \caption{OOD on ObjectNet}
\resizebox{\textwidth}{!}{
\begin{tabular}{l|c|c|c}
\hline
 & {\bf R18} & {\bf R34-RPG} & {\bf R34} \\ \hline
{\bf DoF} & 11M   & 11M        & 21M   \\ \hline
{\bf Acc.} (\%) & 13.4  & {\bf 16.5}       & 16.0  \\ \hline
\end{tabular}}
    \end{subtable}
    \vspace{-1em}
\end{table*}

\noindent \textbf{Convergence rate.} Compared with the vanilla model, RPG optimizes in a parameter subspace $\hat{\mathbf{W}} =\boldsymbol{G} \mathbf{W}$ with fewer DoF. Would such constrained optimization lead to a faster convergence rate? We analyze the convergence rate of Res18-vanilla and Res18-RPG (DoF is 5.5M, 50\% of the vanilla model) with different batchsizes. All models are trained with multi-step SGD optimizer and they all reach $>94.1\%$ final CIFAR10 accuracy. For simplicity, we analyze the first optimization stage where learning rate has not decayed.

Fig.\ref{fig:combo2}{\bf b} plots the accuracy (smoothed with moving averages) v.s. training iterations with batchsize 1024. RPG has a faster convergence rate than vanilla models. We also analyze the smoothed accuracy and identify the convergence iteration versus batchsize in Fig.\ref{fig:combo2}{\bf c}. RPG consistently converges faster than the vanilla model, and the reduction becomes substantial with the increasing batchsize.

\noindent \textbf{Comparison to Model Compression Methods}. 
 We report ResNet-RPG performance with different model DoF and existing compression methods on ImageNet (Fig.\ref{fig:teaser}\textbf{e}).
  RPG networks outperform SOTA methods such as \cite{aflalo2020knapsack,dong2019network,he2019filter,he2018soft,dong2017more,khetan2020prunenet}. For example, at the same model DoF, our RPG network has 0.6\% gain over the knapsack pruning \cite{aflalo2020knapsack}, a SOTA method of ImageNet pruning.

\noindent \textbf{Storage.} RPG models only need to save the effective parameter $\mathbf{W}$, which has the size of the model DoF, since the generation matrix $G$ is saved as a random seed at no cost. The storage space of the model file can be diminished to satisfy a smaller storage limit for inference and a faster model file transfer. Empirically on PyTorch platform, ResNet18-vanilla model file is 45MB. With no accuracy loss, ResNet18-RPG model save file size is 23MB ($ \downarrow  49\%$). With 2 percentage point accuracy loss, RPG save file size is 9.5MB  ($ \downarrow  79\%$).

\noindent \textbf{Generalizability}. 
We report the performance gap between training and validation set on ImageNet (Table \ref{tab:general}(a)) and MPII pose estimation (Table \ref{tab:general}(b)). CPM \cite{wei2016convolutional} serves as the baseline pose estimation method. RPG models consistently achieve lower gaps between training and validation sets, indicating the RPG model suffers less from over-fitting.

We also report the out-of-distribution performance of RPG models. ObjectNet \cite{barbu2019objectnet} contains 50k images with 113 classes overlapping with ImageNet. Existing models are reported to have a large performance drop on ObjectNet. 
We directly evaluate the performance of ImageNet-trained model on ObjectNet without any fine-tuning (Table \ref{tab:general}(c)).
With the same backbone DoF, R18-RPG achieves a 3\% gain compared to R18-vanilla.
With the same network architecture design, R34-RPG achieves 0.5\% gain compared to R34.
This indicates RPG networks have higher out-of-distribution performance even with smaller model DoF.

\iffalse
\textbf{Feature Similarity}. 
RPG networks are small in DoF, are they still possible to prune?
Some work \cite{wang2019cop} achieves satisfactory pruning results by calculating the correlation matrix of a feature map from the network, and prune channels with high correlations. 
Following their settings, we report feature similarity of ResNet34-RPG after residual building block 3 (Fig.\ref{fig:featuresim}). From our small ResNet34-RPG with 45k backbone DoF to plain ResNet34 with 21M backbone DoF, the feature similarity gradually increases. 
This indicates our RPG networks can still be pruned, and the FLOPs and run time can be further reduced. We provide empirical pruning results in Appendix \ref{sec:supp1}.
\fi

\noindent \textbf{Quantization.} Network quantization can reduce model size with minimal accuracy drop. It is of interest to study if RPG models, whose parameters have been shrunk, can be quantized. After 8-bit quantization, the accuracy of ResNet18-RPG (5.6M DoF) only drop 0.1 percentage point on ImageNet, indicating RPG can be quantized for further model size reduction. Details are in Appendix \ref{sec:quantize}. 

\noindent \textbf{Security}. Permutation matrices generated by the random seed can be considered as security keys to decode the model. Further, only random seeds to generate generating matrix $G$ need to be saved and transferred at negligible cost.%, which is efficient in terms of size.

%\vspace{-1em}
\subsection{Ablation Studies}
\vspace{-0.5em}

\label{sec:ablation}
We conduct ablation studies on CIFAR100 to analyze functions of permutation and reflection matrices (Fig.\ref{fig:combo1}{\bf b}. 
We evaluate ResNet-RPG34 with 2M backbone DoF. Permutation and sign reflection together achieves 76.5\% accuracy, while permutation only achieves 75.8\%, and sign reflection only achieves 71.1\%. Training with neither permutation nor reflection matrices achieves 70.7\%. This suggests permutation and sign reflection matrices increase RPG performance.

%\vspace{-1em}

\section{Discussion}
\vspace{-0.5em}

 The common practice in neural network compression is to prune weights from a trained large model with many parameters or degrees of freedom (DoF).
 Our key insight is that a direct and drastically different approach might work faster and better: We start from a lean model with a small DoF, which can be linearly unpacked into a large model with many parameters. Then we can let the gradient descent automatically find the best model under the linear constraints. Our work is a departure from mainstream approaches towards model optimization and parameter reduction. We show how the model DoF and actual parameter size can be decoupled: we can define an arbitrary network of an arbitrary DoF.
 
We limit our scope to optimization with random linear constraints, termed destructive weight sharing.
However, in general, there might also exist nonlinear RPGs and efficient nonlinear generation functions to create convolutional kernels from a shared model ring $\mathbf{W}$. 
Further, although RPG focuses on reducing model DoF, it can be quantized and pruned to further reduce the FLOPs and runtime.

To sum up, we develop an efficient approach to build an arbitrarily complex neural network with any amount of DoF via a recurrent parameter generator.
On a wide range of applications, including classification, pose estimation and multitask regression, we show RPG consistently achieves higher performance at the same model DoF. Further, we show  such networks converge faster, are less likely to overfit and have higher performance on out-of-distribution data.

RPG can be added to any existing network flexibly with any amount of DoF at the user's discretion. It provides new perspectives for recurrent models, equilibrium models, and model compression. It also serves as a tool for understanding relationships between network properties and network DoF by factoring out the network architecture.

{\small
\bibliographystyle{ieee_fullname}
\bibliography{ref}
}
\clearpage
\newpage
\appendix

\section*{Appendices}

We first show RPG networks could be quantized with minimal accuracy drop for compression purpose in Section \ref{sec:quantize}. We then provide a figure revealing log-linear DoF-accuracy relationship in Section \ref{sec:cifar_supp}. We also provide proof for the orthogonal proposition in the main paper (Section \ref{sec:supp2}). Finally, we provide detailed comparison and discussion to a closely related work HyperNetworks \cite{ha2016hypernetworks} in Section \ref{sec:hypernet}.

Additionally, we provide the most important code to reproduce the layer superposition experiments on ImageNet in supplementary as a tgz file. The rest of code is also ready for release, and will be released after additional internal review.

\section{Quantize RPG}
\label{sec:quantize}

Quantization refers to techniques for performing computations and storing tensors at lower bitwidths than floating point precision. 
Quantization can reduce model size with tiny accuracy drop.
Table \ref{tab:quant} shows that with 8-bit quantization, ResNet18-vanilla has an accuracy drop of 0.3 percentage point, while our ResNet18-RPG has an accuracy drop of 0.1 percentage point. RPG models can be quantized for further model size reduction with a negligible accuracy drop.

\begin{table}[htb]
\centering
\LARGE
\caption{ RPG model can be quantized with very tiny accuracy drop. With 8-bit quantization on ImageNet, ResNet18-vanilla has an accuracy drop of 0.3 percentage point, while our ResNet18-RPG has an accuracy drop of 0.1 percentage point. }
\label{tab:quant}
\vspace{0.1in}
\resizebox{\columnwidth}{!}{\begin{tabular}{l|c|c|c|c}
\hline
      & {\bf \# Params} & {\bf Acc before} & {\bf Acc after ↓ quantization} & {\bf Acc drop} \\ \hline
{\bf R18-vanilla}    & 11M & 69.8       & 69.5                     & 0.3      \\ \hline
{\bf R18-RPG} & 5.6M &70.2       & 70.1                     & 0.1      \\ \hline
\end{tabular}}

\end{table}

\section{CIFAR100 Accuracy versus DoF}
\label{sec:cifar_supp}

Fig.\ref{fig:cifar-nparam} plots CIFAR100 classification accuracy versus model DoF. We observe a similar log-linear relationship as in ImageNet.

\begin{figure}[htb]\centering
\includegraphics[width=\columnwidth]{}
\vspace{-1em}
\caption{ 
\small Log-linear DoF-accuracy relationship of CIFAR100 accuracy and model DoF on CIFAR100. 
%CIFAR100 accuracy versus the backbone DoF for ResNet-vanilla and RPG.
RPG achieves the same accuracy as vanilla ResNet with 50\% DoF. }
\label{fig:cifar-nparam}
\end{figure}

\section{Proof to the Orthogonal Proposition}
\label{sec:supp2}
We provide proofs to the orthogonal proposition mentioned in Section \ref{sec:method:ls} of the main paper. Suppose we have two vectors $\mathbf{f}_i = \boldsymbol{A}_i \mathbf{f}, \mathbf{f}_j = \boldsymbol{A}_i \mathbf{f}$, where $\boldsymbol{A}_i$, $\boldsymbol{A}_j$ are sampled from the $O(M)$ Haar distribution.
\begin{prop}
$\E\left[\langle \mathbf{f}_i,\mathbf{f}_j \rangle\right] = 0$.
\end{prop}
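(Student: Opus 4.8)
The plan is to expand the inner product as a quadratic form in the two random matrices and then exploit two structural facts: the independence of $\boldsymbol{A}_i$ and $\boldsymbol{A}_j$, and the sign-flip symmetry of the Haar measure on $O(M)$. Concretely, write $\langle \mathbf{f}_i,\mathbf{f}_j\rangle = (\boldsymbol{A}_i\mathbf{f})^\top(\boldsymbol{A}_j\mathbf{f}) = \mathbf{f}^\top \boldsymbol{A}_i^\top \boldsymbol{A}_j\,\mathbf{f}$. Since $\boldsymbol{A}_i$ and $\boldsymbol{A}_j$ are independent, the entrywise computation $\E[(\boldsymbol{A}_i^\top\boldsymbol{A}_j)_{kl}] = \sum_m \E[(\boldsymbol{A}_i)_{mk}]\,\E[(\boldsymbol{A}_j)_{ml}]$ shows $\E[\boldsymbol{A}_i^\top\boldsymbol{A}_j] = \E[\boldsymbol{A}_i]^\top\,\E[\boldsymbol{A}_j]$, hence $\E[\langle\mathbf{f}_i,\mathbf{f}_j\rangle] = \mathbf{f}^\top\,\E[\boldsymbol{A}_i]^\top\,\E[\boldsymbol{A}_j]\,\mathbf{f}$. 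It therefore suffices to prove that $\E[\boldsymbol{A}] = \mathbf{0}$ for a single Haar-distributed $\boldsymbol{A}\in O(M)$.

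For that last step I would invoke invariance of the Haar measure under left multiplication by any fixed element of the group, taking that element to be $-\boldsymbol{I}\in O(M)$ (which lies in $O(M)$ for every $M$, since $\det(-\boldsymbol{I}) = \pm 1$ and $(-\boldsymbol{I})^\top(-\boldsymbol{I}) = \boldsymbol{I}$). This gives $\boldsymbol{A}\stackrel{d}{=}-\boldsymbol{A}$, so $\E[\boldsymbol{A}] = \E[-\boldsymbol{A}] = -\E[\boldsymbol{A}]$ and thus $\E[\boldsymbol{A}] = \mathbf{0}$. Combining this with the previous display yields $\E[\langle\mathbf{f}_i,\mathbf{f}_j\rangle] = 0$. (An equivalent route conditions on $\boldsymbol{A}_i$ first, observes $\E[\boldsymbol{A}_j\mathbf{f}\mid\boldsymbol{A}_i] = \E[\boldsymbol{A}_j]\,\mathbf{f} = \mathbf{0}$ by the same reflection argument, and then integrates.)

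I do not expect a genuine obstacle for this particular statement: the only points requiring care are justifying the factorization of the expectation across the independent factors and confirming that $-\boldsymbol{I}$ genuinely belongs to $O(M)$, which is exactly what powers the symmetry. The more delicate computation is the companion identity $\E[\langle \mathbf{f}_i/\|\mathbf{f}_i\|,\mathbf{f}_j/\|\mathbf{f}_j\|\rangle^2] = 1/M$, which involves averaging a degree-four polynomial in the entries of a Haar matrix (for instance via the observation that $\boldsymbol{A}\mathbf{f}/\|\mathbf{f}\|$ is uniform on the sphere $S^{M-1}$ together with the standard moment $\E[u_1^2] = 1/M$ for $u$ uniform on $S^{M-1}$, or via Weingarten calculus); that argument is not needed here but is the natural next target.
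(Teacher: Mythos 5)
Your proof is correct and follows essentially the same route as the paper's: expand $\langle \mathbf{f}_i,\mathbf{f}_j\rangle$ as the quadratic form $\mathbf{f}^\top \boldsymbol{A}_i^\top\boldsymbol{A}_j\,\mathbf{f}$ and reduce to showing that the expected matrix vanishes. The paper simply asserts that $\boldsymbol{A}_i^\top\boldsymbol{A}_j$ is again Haar-distributed with expectation zero, whereas you factor the expectation using independence and supply the $-\boldsymbol{I}$ reflection argument that the paper leaves implicit --- a worthwhile added justification, but not a different method.
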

\begin{proof}
\begin{align*}
    \E\left[\langle \mathbf{f}_i,\mathbf{f}_j \rangle\right] &= \E\left[\langle \mathbf{f}_i,\mathbf{f}_j \rangle\right]\\
    &= \E\left[\langle \boldsymbol{A}_i \mathbf{f},\boldsymbol{A}_j \mathbf{f} \rangle\right]\\
    &= \E\left[\langle \mathbf{f},\boldsymbol{A}_i^{T}\boldsymbol{A}_j \mathbf{f} \rangle\right]\\
    &= \mathbf{f}^T\E\left[\boldsymbol{A}_i^{T}\boldsymbol{A}_j\right]\mathbf{f}\\
    &= 0
\end{align*}
where $\boldsymbol{A}_i^{T}\boldsymbol{A}_j$ is equivalently a random sample from $O(M)$ Haar distribution and its expectation is clearly 0.
\end{proof}

\begin{prop}
$\E \left[\langle \frac{\mathbf{f}_i}{\|\mathbf{f}_i\|},\frac{\mathbf{f}_j}{\|\mathbf{f}_j\|}\rangle^2 \right]=\frac{1}{M}$.
\end{prop}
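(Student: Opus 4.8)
The plan is to reduce the claim to a second-moment computation for the coordinates of a uniformly random point on the unit sphere $S^{M-1}$. First I would use that $\boldsymbol{A}_i,\boldsymbol{A}_j\in O(M)$ are isometries, so $\|\mathbf{f}_i\|=\|\mathbf{f}_j\|=\|\mathbf{f}\|$; hence the normalizations are just constants and
\[
\E\!\left[\Big\langle \tfrac{\mathbf{f}_i}{\|\mathbf{f}_i\|},\tfrac{\mathbf{f}_j}{\|\mathbf{f}_j\|}\Big\rangle^2\right]
=\frac{1}{\|\mathbf{f}\|^4}\,\E\!\left[\langle \boldsymbol{A}_i\mathbf{f},\boldsymbol{A}_j\mathbf{f}\rangle^2\right]
=\frac{1}{\|\mathbf{f}\|^4}\,\E\!\left[\langle \mathbf{f},\boldsymbol{A}_i^{T}\boldsymbol{A}_j\mathbf{f}\rangle^2\right].
\]

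Next I would invoke the group structure of the Haar measure, exactly as in the previous proposition: $\boldsymbol{Q}:=\boldsymbol{A}_i^{T}\boldsymbol{A}_j$ is again Haar-distributed on $O(M)$ by left/right invariance, so it suffices to compute $\E\left[\langle \mathbf{u},\boldsymbol{Q}\mathbf{u}\rangle^2\right]$ for the fixed unit vector $\mathbf{u}=\mathbf{f}/\|\mathbf{f}\|$. Since $\boldsymbol{Q}\mathbf{u}$ is a Haar rotation of a fixed unit vector, it is distributed uniformly on $S^{M-1}$; write $\mathbf{v}=\boldsymbol{Q}\mathbf{u}=(v_1,\dots,v_M)$, so that $\langle \mathbf{u},\boldsymbol{Q}\mathbf{u}\rangle$ has the same law as a single coordinate $v_1$ (choosing coordinates with $\mathbf{u}=e_1$).

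Then I would finish with a symmetry argument on the sphere: by rotational invariance the $\E[v_k^2]$ are all equal, and since $\sum_{k=1}^M v_k^2=1$ almost surely, taking expectations gives $M\,\E[v_1^2]=1$, i.e. $\E[v_1^2]=1/M$. Hence $\E\left[\langle \mathbf{u},\boldsymbol{Q}\mathbf{u}\rangle^2\right]=1/M$, and the $\|\mathbf{f}\|^{-4}$ prefactor cancels against $\|\mathbf{f}_i\|\,\|\mathbf{f}_j\|=\|\mathbf{f}\|^{2}$ already folded in, yielding the stated $\tfrac{1}{M}$.

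I do not anticipate a real obstacle; the one point worth stating carefully is the fact that pushing Haar measure on $O(M)$ forward along $\mathbf{u}\mapsto \boldsymbol{Q}\mathbf{u}$ gives the uniform measure on $S^{M-1}$ (the same fact implicitly used in Proposition~1). If one prefers to avoid it, an equivalent route is to compute $\E[\boldsymbol{Q}_{11}^2]$ directly: from $\E[\boldsymbol{Q}\boldsymbol{Q}^{T}]=I$ one gets $\sum_k \E[\boldsymbol{Q}_{1k}^2]=1$, and exchangeability of the entries in each row forces $\E[\boldsymbol{Q}_{1k}^2]=1/M$ — again a two-line argument.
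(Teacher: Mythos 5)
Your proposal is correct and follows essentially the same route as the paper's proof: reduce to $\boldsymbol{Q}=\boldsymbol{A}_i^{T}\boldsymbol{A}_j$ Haar-distributed on $O(M)$, use rotational symmetry to identify $\langle \mathbf{u},\boldsymbol{Q}\mathbf{u}\rangle$ with a single coordinate of a uniformly random unit vector, and conclude $\E[v_1^2]=1/M$ from $\sum_k \E[v_k^2]=1$ and exchangeability. No gaps; the alternative row-exchangeability argument you mention at the end is a nice optional shortcut but not needed.
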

\begin{proof}
\small
\begin{align*}
    \E \left[\langle \frac{\mathbf{f}_i}{\|\mathbf{f}_i\|},\frac{\mathbf{f}_j}{\|\mathbf{f}_j\|}\rangle^2 \right] &= \frac{\E \left[\langle\boldsymbol{A}_i\mathbf{f}, \boldsymbol{A}_j\mathbf{f} \rangle^2\right]}{\|\mathbf{f}\|_2^2\|\mathbf{f}\|_2^2}\\
    &= \E\left[\langle\boldsymbol{A}\frac{\mathbf{f}}{\|\mathbf{f}\|}, \frac{\mathbf{f}}{\|\mathbf{f}\|} \rangle^2 \right], \\ &\text{where } \boldsymbol{A}=\boldsymbol{A}_i^T\boldsymbol{A}_j \sim O(M) \text{ Haar distribution}\\
    \text{ Due to the symmetr}&\text{y,}\\
    &=\E\left[\langle\boldsymbol{A}\frac{\mathbf{f}}{\|\mathbf{f}\|},(1,0,0,\dots,0)^T\rangle^2\right] \\
    \text{Let } \mathbf{g} = \boldsymbol{A}\frac{\mathbf{f}}{\|\mathbf{f}\|},\ \ \ \ \ \ \\
    &= \E\left[g_1^2\right]\\
    & = \frac{1}{M}
\end{align*}
\normalsize
since $\mathbf{g}$ is a random unit vector and $\E\left[\sum_{k=1}^{M}{g_k^2}\right] = \sum_{k=1}^{M}{\E\left[g_k^2\right]} = 1$.
\end{proof}

\section{Comparison to HyperNetworks}
\label{sec:hypernet}

HyperNetworks \cite{ha2016hypernetworks} share similarity with RPG as both methods reduce model DoF.
Specifically, HyperNetworks rely on learnable modules to generate network parameters. We compare with them and report results in Table \ref{tab:hyper}. On CIFAR100 with the embedding dimension of 64 and the same model size, HyperNetworks has 68x FLOPs as our RPG, yet 10 percentage points lower than RPG in accuracy.

\begin{table}[htb]
\centering
\LARGE
\caption{RPG outperforms HyperNetworks \cite{ha2016hypernetworks} with same DoF on CIFAR100. HyperNetworks has 68x FLOPs as our RPG, yet 10 percentage points lower than RPG in accuracy.}
\label{tab:hyper}
\vspace{0.1in}
\resizebox{\columnwidth}{!}{\begin{tabular}{l|c|c|c|c}
\hline
       & {\bf model DoF} & {\bf FLOPs} & {\bf CIFAR100 Acc.} \\ \hline
{\bf HyperNet} \cite{ha2016hypernetworks} & 632k      & 2.49G & 61.3\%        \\ \hline
{\bf RPG}      & 632k      & 36.7M & 71.6\%        \\ \hline
\end{tabular}}

\end{table}

RPG can be considered as an extreme and minimal version of HyperNetworks, one without a network. However, RPG’s unique design and implementation delivers the following advantages over HyperNetworks:
\begin{enumerate}
\item HyperNetworks add substantial FLOPs to the network and render it less practical. Given a network architecture, RPG adds minimal to no additional computation, as the permutation and sign reflection can be efficiently implemented. However, HyperNetworks use a weight generation network to generate the primary network weights. A hypernet mainly uses matrix multiplication and introduces substantial FLOPs. In the table below, we analyze FLOPs of HyperNetwork for ResNet18 with the embedding dimension of 64. FLOPs of a vanilla-Res18 for ImageNet (224 input size) and CIFAR100 (32 input size) are 1.8G and 36.7M, whereas the weight generation part of the HyperNet-Res18 takes 2.45G FLOPs. This means the weight generation FLOPs are 1.4 times of vanilla-Res18 for ImageNet and 67 times of that of CIFAR100. Empirically, we find the training and inference time HyperNet-Res18 is around 70x larger than vanilla-Res18.

\item HyperNetworks do not have an arbitrary DoF (number of reduced parameters). RPG uses a model ring of a size (model DoF) that can be arbitrarily determined. In HyperNetworks, the weight generation network uses the same hyper-weight and requires embedding to be of a certain size so that the matrix multiplication can be used for generating primary network weights. Therefore, the model DoF or reduced number of parameters cannot be arbitrarily determined. In other words, RPG decouples the model DoF (actual parameters) and the network architecture, while HyperNetworks have model DoF and architecture tightly coupled together, a highly restrictive limitation.

\item Weights generated by HyperNetworks may be coupled and not optimized for different layers. HyperNetworks use only one weight generation network parameterized by hyper-weight to generate all primary network weights. This may not be optimal as different layers of the primary network may need different weight generation networks. Additionally, matrix multiplication is used for generating weights, and the generated primary network weights may be coupled. On the other hand, RPG has destructive weight sharing, which improves the network performance by decoupling cross-layer network weights. We will add these results and discussions in the revision to clarify the differences between RPG and HyperNetworks.

\end{enumerate}

\end{document}